\documentclass[a4paper,10pt]{article}
\usepackage[utf8x]{inputenc}
\usepackage{amsmath}
\usepackage{amssymb}
\usepackage{amsthm}
\usepackage[dvips]{graphicx}
\usepackage{cite}
\usepackage{pgfplots}
\usepackage{geometry}
\usepackage[authoryear, round]{natbib}
\setcounter{MaxMatrixCols}{30}
\usepackage{amsfonts}
\providecommand{\U}[1]{\protect\rule{.1in}{.1in}}
\newtheorem{definition}{Definition}
\newtheorem{algorithm}{Algorithm}
\newtheorem{theorem}{Theorem}

\begin{document}

\title{Stratified Graphical Models - Context-Specific Independence in Graphical Models}
\author{Henrik Nyman$^{\ast,2}$, Johan Pensar$^{2}$, Timo Koski$^{3}$, Jukka Corander$^{1,2}$ \\
$^{1}$Department of Mathematics and statistics, University of Helsinki, Finland \\
$^{2}$Department of Mathematics, \AA bo Akademi University, Finland \\
$^{3}$Department of Mathematics \\ KTH Royal Institute of Technology, Stockholm, Sweden \\
$^{\ast}$Corresponding author, Email: hennyman@abo.fi}
\date{}
\maketitle

\begin{abstract}
Theory of graphical models has matured over more than three decades to provide the backbone for several classes of models that are used in a myriad of applications such as genetic mapping of diseases, credit risk evaluation, reliability and computer security, etc. Despite of their generic applicability and wide adoptance, the constraints imposed by undirected graphical models and Bayesian networks have also been recognized to be unnecessarily stringent under certain circumstances. This observation has led to the proposal of several generalizations that aim at more relaxed constraints by which the models can impose local or context-specific dependence structures. Here we consider an additional class of such models, termed as stratified graphical models. We develop a method for Bayesian learning of these models by deriving an analytical expression for the marginal likelihood of data under a specific subclass of decomposable stratified models. A non-reversible Markov chain Monte Carlo approach is further used to identify models that are highly supported by the posterior distribution over the model space. Our method is illustrated and compared with ordinary graphical models through application to several real and synthetic datasets.
\end{abstract}

\noindent Keywords: Graphical model; Context specific interaction model; Markov chain Monte Carlo; Bayesian model learning; Multivariate discrete distribution.

\section{Introduction}
Along the path of development of the statistical theory of graphical models (GMs) largely set by the classic works of \citet{Darroch80} and \citet{Lauritzen89}, multifaceted generalizations of the original Markov dependence concepts have flourished as the field gained momentum. Despite of the versatility of graphical models to encode the dependence structure over a set of discrete variables, there are several alternative model classes that are motivated by the failure of GMs to capture some forms of dependence or independence. For instance, hierarchical log-linear models that lack a direct graphical model representations were considered extensively already before the theory of graphical models took a concrete form, see for instance \citet{Haberman74}, \citet{Bishop07}, and more recently \citet{Hara12}. A particular challenge related to such models is the burdensome interpretation, which is one of the core advantages of graphical models.

An observation independently made in several enhancements of the theory of graphical models for discrete multivariate distributions is that the use of the basic concept of conditional independence may casually hide the existence of multiple local or context-dependent independencies. Using the theory of log-linear models for contingency tables as their basis, \citet{Corander03a}, \citet{Eriksen99, Eriksen05}, and \citet{Hojsgaard03, Hojsgaard04} introduced a variety of ways to generalize graphical models. The common notion in these models is that the conditional independence is replaced by an independence that holds only in a subspace of the outcome space of the variables included in a particular condition. Such restrictions may for instance be imposed in a recursive fashion as in \citet{Hojsgaard04}, in which case a variable that has been included in a context to split contingency tables into subsets where distinct dependence structures are imposed, can no longer itself be a subject to a local independence statement conditional on other variables. Completely independently of these developments found in the statistical literature, the machine learning community has witnessed the development of context-dependent Bayesian networks in \citet{Boutilier96}, \citet{Friedman96}, and \citet{Koller09}. The recursive approach has been considered also in this setting, as \citet{Boutilier96} introduced trees of conditional probability tables which form the backbone of Bayesian networks.

The above cited methods for obtaining a context-specific dependence structure for a set of variables impose rather extensive restrictions. In order to simplify statistical inference about the model parameters and learning of the model structure, we here aim at loosening the restrictions by introducing a larger and more general class of \textit{stratified graphical models} (SGMs), expanding the results of \citet{Corander03a}. The notion of stratification referred to here is distinct from that used in \citet{Geiger01}, who considered stratified exponential families for graphical models with hidden variables. In our framework stratification refers instead solely to observed variables. SGMs offer the advantage that context-specific independencies can be read directly off the graphs, promoting the comprehension of the dependence structure. We consider Bayesian inference for the class of SGMs and show that marginal likelihoods can be calculated analytically for a subclass of decomposable models. Learning of model structures associated with high posterior probabilities is performed using the non-reversible Markov chain Monte Carlo (MCMC) algorithm introduced in \citet{Corander08}.

This paper is organized as follows. SGMs are formally introduced in Section \ref{secSGM}. An analytical expression for the marginal likelihood given a decomposable SGM is derived in Section \ref{secML}. In section \ref{secAlgorithm} we present an MCMC-based search algorithm which is used to discover models associated with high posterior probabilities. Several synthetic and real datasets are used to illustrate the potential of SGMs in Section \ref{secRes}. The final section provides some concluding remarks along with some ideas for future research related to these models.

\section{Stratified Graphical Models}
\label{secSGM}
To enable the presentation of SGMs, some of the central concepts from the theory of GMs are first introduced. For a comprehensive account of the statistical and computational theory of probabilistic GMs, see \citet{Whittaker90}, \citet{Lauritzen96}, and \citet{Koller09}. While the terms node and variable are closely related when considering graphical models, we will in this article strive to use the notation $X_{\delta}$ when referring to the variable associated to node $\delta$. Let $G(\Delta,E)$, be an undirected graph, consisting of a set of nodes $\Delta$ which represent a set of random variables and of a set of undirected edges $E\subseteq\{\Delta \times\Delta\}$.  It is assumed throughout this article that all considered variables are binary. However, the introduced theory can readily be extended to finite discrete variables. 

For a subset of nodes $A \subseteq \Delta$, $G_{A}=G(A,E_{A})$ is a subgraph
of $G$, such that the nodes in $G_{A}$ are equal to $A$ and the edge set
comprises those edges of the original graph for which both nodes are in $A$,
i.e. $E_{A} = \{A \times A\} \cap E$. The outcome space for the variables $X_A$, where $A \subseteq \Delta$, is
denoted by $\mathcal{X}_{A}$ and an element in this space by $x_{A}
\in \mathcal{X}_{A}$. Given our restriction to binary variables, the
cardinality $|\mathcal{X}_{A}|$ of $\mathcal{X}_{A}$ equals $2^{|A|}$. Two
nodes $\gamma$ and $\delta$ are \textit{adjacent} in a graph if $\{\gamma, \delta\}\in E$, 
that is an edge exists between them. A \textit{path} in a graph is a
sequence of nodes such that for each successive pair within the sequence the
nodes are adjacent. A \textit{cycle} is a path that starts and ends with the same node. 
A \textit{chord} in a cycle is an edge between two non-consecutive nodes in the cycle. 
Two sets of nodes $A$ and $B$ are said to be \textit{separated} by
a third set of nodes $S$ if every path between nodes in $A$ and nodes in $B$ contains
at least one node in $S$. A graph is defined as \textit{complete} when all pairs of nodes
in the graph are adjacent.

A graph is defined as \textit{decomposable} if all cycles found in the graph containing four or more unique nodes contains at least one chord. A \textit{clique} in a graph is a set of nodes $C$ such that the subgraph $G_{C}$ is complete and there exists no other set
$C^*$ such that $C \subset C^*$ and $G_{C^*}$ is also complete. The set of cliques in the graph $G$ will be denoted by $\mathcal{C}(G)$. The set of \textit{separators}, $\mathcal{S}(G)$, in the decomposable graph $G$ can be obtained through intersections of the cliques of $G$ ordered in terms of a junction tree, see e.g. \citet{Golumbic04}. A graphical model can be defined as the pair $G=G(\Delta,E)$ and the joint distribution $P_{\Delta}$ on the variables $X_{\Delta}$, such that $P_{\Delta}$ factorizes according to $G$ (see equation \eqref{factor} for decomposable graphs). Given only the graph of a GM it is possible to ascertain if two sets of random variables $X_A$ and $X_B$ are conditionally independent given another set of variables $X_S$, due to the global Markov property
\[
X_A \perp X_B \mid X_S,\text{ if } S \text{ separates } A \text{ from } B \text{ in } G.
\]
A statement of conditional independence of two variables $X_{\delta}$ and $X_{\gamma}$ given $X_S$ imposes fairly strong restrictions to the joint distribution since the condition $P(X_{\delta}, X_{\gamma} \mid X_S) = P(X_{\delta} \mid X_S) P(X_{\gamma} \mid X_S)$ must hold for any joint outcome of the variables $X_S$. The idea common to context-specific independence models is to lift some of those restrictions to achieve more flexibility in terms of model structure. Exactly which restrictions are allowed to be simultaneously lifted varies considerably over the proposed model classes.

Consider a GM with the complete graph spanning three nodes $(1, 2, 3)$, which specifies that there are no conditional independencies among the variables $X_1$, $X_2$, and $X_3$. However, if the probability $P(X_1=1, X_2=x_2, X_3=x_3)$ factorizes into the product $P(X_1=1) P(X_2=x_2 \mid X_1=1) P(X_3=x_3 \mid X_1=1)$ for all outcomes $x_2 \in \{0,1\}, x_3 \in \{0,1\}$, then a simplification of the joint distribution is hiding beneath the graph. This simplification can be included in the graph by labeling the edge $(2, 3)$ with the \textit{stratum} where the context-specific independence $X_2 \perp X_3 \mid X_1=1$ of the two variables holds, as illustrated in Figure \ref{SGMs}a.
\begin{figure}[htb]
\begin{center}
\includegraphics{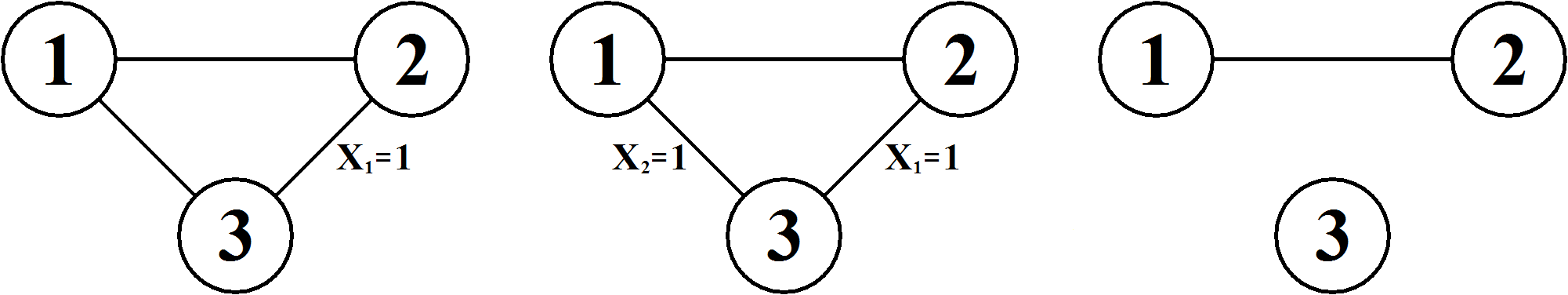}
\end{center}
\caption{Graphical representation of the dependence structures of three variables. In (a) the stratum $X_1=1$ is shown as a label on the edge $\{2, 3\}$, in (b) strata $X_1=1$ and $X_2=1$ are shown as labels on the edges $\{2, 3\}$ and $\{1, 3\}$, respectively, in (c) an undirected graph with the cliques $\{1, 2\}$ and $\{3\}$ is shown.}
\label{SGMs}
\end{figure}
The following is a formal definition of what is intended by a stratum.
\begin{definition} \label{stratum} Stratum. Let the pair $(G, P_{\Delta})$ be a GM for $\Delta$. For all
$\{\delta,\gamma\} \in E$, let $L_{\{\delta,\gamma\}}$ denote the set of nodes adjacent
to both $\delta$ and $\gamma$. For a non-empty $L_{\{\delta,\gamma\}}$, define the stratum
of the edge  $\{\delta,\gamma\}$ as the subset $\mathcal{L}_{\{\delta,\gamma\}}$ of outcomes
$x_{L_{\{\delta,\gamma\}}} \in \mathcal{X}_{L_{\{\delta,\gamma\}}}$ for which $X_{\delta}$ and $X_{\gamma}$ are
independent given $X_{L_{\{\delta,\gamma\}}} = x_{L_{\{\delta,\gamma\}}}$, i.e. $\mathcal{L}_{\{\delta,\gamma\}} = 
\{ x_{L_{\{\delta,\gamma\}}} \in  \mathcal{X}_{L_{\{\delta,\gamma\}}} : X_{\delta} \perp X_{\gamma} \mid X_{L_{\{\delta,\gamma\}}} =  x_{L_{\{\delta,\gamma\}}} \}$.
\end{definition}

A label on an edge in a graph is a graphical representation of a corresponding stratum. The idea of context-specific independence generalizes readily to a situation where multiple strata for distinct pairs of variables are considered. Figure \ref{SGMs}b displays the complete graph for three nodes with the edges $\{2, 3\}$ and $\{1, 3\}$ labeled with the strata $X_1=1$ and $X_2=1$, respectively. In addition to the context-specific independence statement present in Figure \ref{SGMs}a, here we have the simultaneous restriction that $X_1 \perp X_3 \mid X_2=1$, such that $P(X_1=x_1, X_2=1, X_3=x_3) = P(X_2=1) P(X_1=x_1 \mid X_2=1) P(X_3=x_3 \mid X_2=1)$ for all outcomes $x_1\in \{0,1\}, x_3\in\{0,1\}$. This pair of restrictions does not imply that $P(X_3=x_3) =  P(X_3=x_3 \mid X_1=1, X_2=1)$ as would be the case given the graph in Figure \ref{SGMs}c. It does, however, imply that the information contained about $X_3$ in the knowledge that $X_1=1$ and $X_2=1$ must be the same, i.e. $P(X_3=x_3 \mid X_1=1) = P(X_3=x_3 \mid X_2=1) = P(X_3=x_3 \mid X_1=1, X_2=1)$.

The following definition is a slight modification from \citet[p.~496]{Corander03a} and
formalizes an extension to ordinary graphical models. This class of models allow for simultaneous
context-specific independence to be represented using a set of strata, partitioning the outcome
space of the variables $X_{\Delta}$.
\begin{definition}
Stratified graphical model (SGM). A stratified graphical model is defined by the triple $(G, L, P_{\Delta})$, where $G$ is the underlying graph, $L$ equals the joint collection of all strata $\mathcal{L}_{\{\delta,\gamma\}}$ for the edges of $G$, and $P_{\Delta}$ is a joint distribution on $\Delta$ which factorizes according to the restrictions imposed by $G$ and $L$.
\end{definition}

The pair $(G, L)$ consisting of the graph $G$ with the labeled edges determined by $L$ will be referred to as a stratified graph (SG), usually denoted by $G_L$. When the collection of strata $L$ is empty, $G_{L}$ equals $G$. The distribution $P_{\Delta}$ is defined as \textit{faithful} if it does not contain any conditional or context-specific independencies that are not deducible from $(G, L)$. Figure \ref{SGMs}a illustrates an SG with $G$ equal to the complete graph and $L$ including the single stratum $\mathcal{L}_{\{2, 3\}} = \{X_1=1\}$. Correspondingly, the SG shown in Figure \ref{SGMs}b has the same underlying graph with the two strata $L=\{\mathcal{L}_{\{2, 3\}} = \{X_1=1\}, \mathcal{L}_{\{1, 3\}}=\{X_2=1\}\}$.

The remainder of this section will be used to determine a framework that will allow for the derivation of an analytical expression of the marginal likelihood of a dataset given a stratified graph. Unfortunately, this will involve introducing a set of restrictions to the graph structure, resulting in a subclass of \textit{decomposable SGMs}. The restrictions imposed here are, however, far less extensive then those imposed in \citet{Corander03a}. In addition, \citet{Corander03a} did not consider structural learning of the context-specific graphs by using posterior probabilities, instead a simpler approach with penalized predictive entropies was adopted for such inference.

Consider a stratified graph with a decomposable underlying graph $G$ having the cliques $\mathcal{C}(G)$ and separators $\mathcal{S}(G)$. The SG is defined as decomposable if no labels are assigned to edges in any separator and in every clique all labeled edges have at least one node in common.
\begin{definition}
Decomposable SG. Let $(G, L)$ constitute an SG with $G$ being decomposable. Further, let $E_{L}$ denote the set of all labeled edges, $E_{C}$ the set of all edges in clique $C$, and $E_{\mathcal{S}}$ the set of all edges in the separators of $G$. The SG is defined as decomposable if
\[
E_{L}\cap E_{\mathcal{S}}=\emptyset,
\]
and
\[
E_{L} \cap E_{C} = \emptyset \hspace{0.4cm} \text{ or } \hspace{0.0cm} \bigcap_{\{\delta,\gamma\}\in E_{L}\cap E_{C}} \hspace{-0.5cm} \{\delta,\gamma\} \hspace{0.1cm} \neq \hspace{0.1cm} \emptyset \hspace{0.1cm} \text{ for all } \hspace{0.1cm} C \in \mathcal{C}(G).
\]
\end{definition}
An SGM where $(G, L)$ constitutes a decomposable SG is termed a decomposable SGM. The graphs depicted in Figures \ref{SGMs}a and \ref{SGMs}b are examples of decomposable SGs. Figure \ref{order} displays three SGs with identical underlying graphs, where it is assumed that the nodes are ordered topologically. This entails that it is not necessary to include the variables in the graphical representation of a stratum. Instead of writing a label as $(X_1=0, X_2=0)$, it is sufficient to write $(0, 0)$, as it is uniquely determined which nodes are adjacent to both nodes in the labeled edge. The SG in Figure \ref{order}a is decomposable, the SGs in Figures \ref{order}b and \ref{order}c are not. The graph in Figure \ref{order}b is not decomposable since the clique $\{1, 2, 3, 4\}$ contains the labeled edges $\{1, 2\}$ and $\{3, 4\}$ which have no nodes in common. The graph in Figure \ref{order}c contains the labeled edge $\{1, 4\}$ which also constitutes the separator of cliques $\{1, 2, 3, 4\}$ and $\{1, 4, 5\}$.
\begin{figure}[htb]
\begin{center}
\includegraphics{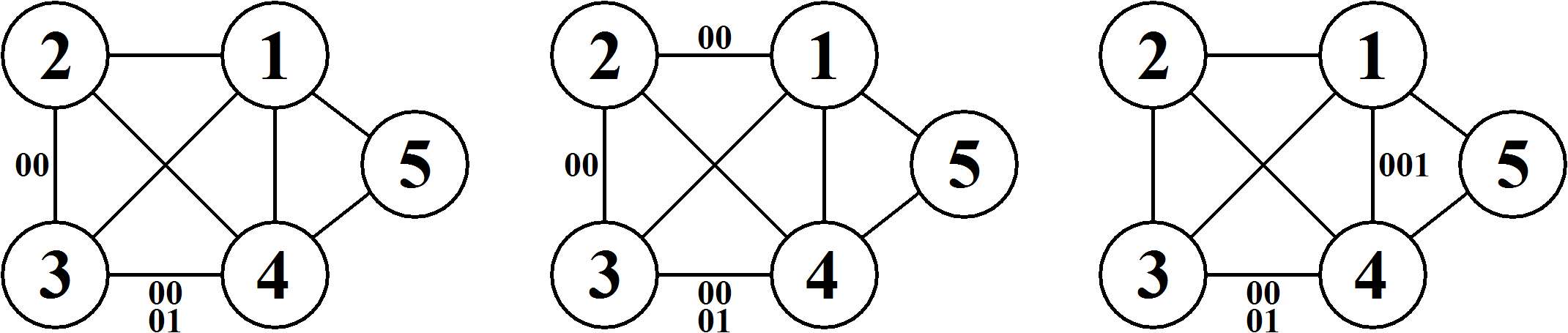}
\end{center}
\caption{Three SGs with the same underlying graph. The graph in (a) is decomposable, the graphs in (b) and (c) are not.}
\label{order}
\end{figure}

As shown in the next section, for a decomposable stratified graph it is possible to calculate the marginal likelihood of a dataset analytically using a modification of the procedure introduced by \citet{Cooper92}. This is due to the fact that a joint distribution faithful to a decomposable SG can be factorized using a \textit{minimal factorization}.
\begin{definition}
Minimal factorization. Given an ordering of the variables $X_{\Delta}$, let $\prod_{\delta \in \Delta}$ $P(X_{\delta} \mid X_{A(\delta)})$ be a factorization of the joint distribution $P_{\Delta}$. Each set $A(\delta)$ is a subset of $B_{\delta}$, where $B_{\delta}$ denotes the complete set of nodes for which all variables in $X_{B_{\delta}}$ precede $X_{\delta}$ in the ordering. The set $B_{\delta}$ is defined as empty if $X_{\delta}$ is the first variable in the ordering. If a factor $P(X_{\delta} \mid X_{A(\delta)})$ is such that there exists a non-empty subset $D \subseteq A(\delta)$ for which $X_{\delta} \perp X_D \mid X_{\Delta} \backslash (X_D \cup X_{\delta})$, the factor contains a false dependency. A factorization that contains no false dependencies is defined as a minimal factorization.
\end{definition}
If we for instance look at the graph in Figure \ref{SGMs}c we can deduce that $P(X_1, X_2, X_3)$ can be factorized both as $P(X_1) P(X_2 \mid X_1) P(X_3 \mid X_1, X_2)$ and $P(X_1) P(X_2 \mid X_1) P(X_3)$. However, the first factorization does not constitute a minimal factorization since $X_3 \perp (X_1, X_2)$.

\begin{theorem}
\label{minimal} A joint distribution, faithful to a stratified graph, possesses a minimal factorization if and only if it is faithful to a decomposable stratified graph.
\end{theorem}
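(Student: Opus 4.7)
The plan is to prove each direction of the equivalence separately, reducing both to an analysis of variable orderings and their induced parent sets.

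\textbf{If direction.} Assume $(G,L)$ is a decomposable SG. I would construct a minimal factorization from a perfect elimination ordering of $G$. Enumerate the cliques $\mathcal{C}(G)$ consistently with a junction tree, and within each clique $C$ place the node common to all labeled edges in $C$---which exists by the second decomposable SG condition---in a designated position; since no labels live on separator edges by the first condition, this can be done consistently across cliques. Take $A(\delta) = N_G(\delta) \cap B_\delta$, giving the standard junction-tree factorization $\prod_\delta P(X_\delta \mid X_{A(\delta)}) = P_\Delta$. For minimality, each $d \in A(\delta)$ is a neighbor of $\delta$ in $G$; by faithfulness to $(G,L)$, the only independencies on $\{d,\delta\}$ are the context-specific ones in its stratum (if labeled), and these are not the full conditional independencies required by a false dependency. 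A parallel argument using the global Markov property rules out false dependencies arising from multi-element $D \subseteq A(\delta)$.

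\textbf{Only-if direction.} Assume $(G,L)$ is not decomposable, and split into three cases according to which clause fails. If $G$ itself is non-decomposable, select a chordless cycle of length at least four: the classical undirected-graph argument shows that under any topological ordering, some cycle node's parent set must include a predecessor that becomes independent of it after conditioning on the remainder, producing a false dependency. If $G$ is decomposable but a labeled edge $\{\delta,\gamma\}$ lies in a separator $S$, the stratum's conditioning set $L_{\{\delta,\gamma\}}$ straddles both sides of $S$; combining this context-specific independence with the separator-induced global CI yields an unconditional independence that surfaces as a false dependency in every ordering. If $G$ is decomposable but a clique $C$ contains labeled edges with empty intersection, pick two such edges sharing no endpoint: no ordering of the nodes of $C$ can absorb both strata into a single factor, and the composition of the two strata forces an unconditional independence among clique nodes that again manifests as a false dependency.

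\textbf{Main obstacle.} The only-if direction is the technical heart, particularly the separator and non-common-clique cases. The delicate step is extracting an \emph{unconditional} conditional independence from the interaction between the context-specific independencies encoded by the strata and the global Markov structure of $G$, and then verifying that this independence manifests as a false dependency under \emph{every} topological ordering, so that no clever choice of ordering can rescue minimality. The if direction, by contrast, reduces to a routine junction-tree construction once the appropriate intra-clique ordering placing the common labeled-edge node is in place.
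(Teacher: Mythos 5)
Your overall architecture coincides with the paper's: the ``if'' direction via a junction-tree factorization with an intra-clique ordering that places the node common to all labeled edges last, and the ``only if'' direction split into three cases (non-decomposable $G$, a labeled separator edge, labeled clique edges with no common node). The ``if'' direction and the chordless-cycle case are essentially the paper's arguments. The genuine gap is the mechanism you propose for the two remaining cases: you plan to combine the offending strata with the global Markov property to ``extract an unconditional conditional independence'' and exhibit it as a false dependency. No such independence exists in general. Take a four-clique on $\{1,2,3,4\}$ with labeled edges $\{1,2\}$ and $\{3,4\}$, say $\mathcal{L}_{\{1,2\}}=\{(x_3,x_4)=(0,0)\}$ and $\mathcal{L}_{\{3,4\}}=\{(x_1,x_2)=(1,1)\}$. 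These strata are mutually consistent, imply no independence of the form $X_{\delta}\perp X_{D}\mid X_{\Delta}\setminus (X_{D}\cup X_{\delta})$ (which is what the definition of a false dependency requires), and a faithful distribution realizing exactly these constraints and nothing more exists for generic parameter values. The analogous statement holds for a labeled separator edge. So the route ``derive a full CI, then show it surfaces as a false dependency under every ordering'' cannot be completed; the step you yourself flag as the technical heart rests on a false claim rather than a missing verification.

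What the paper actually argues in these cases is an ordering obstruction, not a derived independence: a factorization can only exploit a stratum on $\{\delta,\gamma\}$ if the context variables $X_{L_{\{\delta,\gamma\}}}$ are determined before the factor for $X_{\delta}$ (or $X_{\gamma}$) is written down, and the offending configurations make this impossible for every ordering. For a labeled separator edge, the context involves variables on both sides of the separator, which remain mutually dependent until the separator variables themselves are instantiated. For two labeled edges with no common endpoint (and likewise for a labeled triangle), resolving the context of one edge requires first fixing the endpoints of the other and vice versa, a circularity that no ordering breaks. To repair your proof you should replace the ``unconditional independence'' step with this circularity argument, which is also what makes the common-node condition (rather than some weaker overlap condition) the right hypothesis in the definition of a decomposable SG.
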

\begin{proof}
See Appendix A.
\end{proof}

\section{Calculating the Marginal Likelihood for Decomposable Stratified Graphs}
\label{secML}
Let $\mathbf{X}$ denote a data matrix of $n$ binary vectors, each containing $d=|\Delta|$ elements. We use $X_{A}$ to denote the set of variables $\{X_{\delta}: \delta \in A\}$ and correspondingly $\mathbf{X}_{A}$ to denote the subset of $\mathbf{X}$ for the variables in $A$. A probability distribution over the outcome space $\mathcal{X}_{A}$, is determined by a parameter vector $\theta \in \Theta$ where every element $\theta_{i}$ specifies the probability of a specific outcome $x_{A}^{(i)} \in\mathcal{X}_{A}$. The number of such outcomes will subsequently be denoted by $k$. Bayesian inference about undirected graphs and stratified graphs is derived using the posterior distribution over the model space. Given a prior distribution $P(G)$ (or $P(G_L)$), over a model space, the posterior equals $P(G \mid \textbf{X}) = P(\textbf{X} \mid G)P(G) / \sum_{G \in \mathcal{G}} P(\textbf{X} \mid G)P(G)$, where $P(\textbf{X} \mid G)$ is the marginal likelihood of the data given a graph, and $\mathcal{G}$ is the model space.

For an arbitrary decomposable graph $G$, the joint distribution of $\Delta$ factorizes as
\begin{equation}
\label{factor}
P_{\Delta}(X_{\Delta})=\frac{\prod_{C \in \mathcal{C}(G)} P_{C} (X_{C})}{\prod_{S \in \mathcal{S}(G)}P_{S}(X_{S})},
\end{equation}
where $\mathcal{C}(G)$ and $\mathcal{S}(G)$ are the cliques and separators,
respectively, of $G$. Using a prior distribution for the model parameters that
also enjoys the Markov properties with respect to $G$, the marginal likelihood
of the data $\textbf{X}$ factorizes accordingly \citep{Dawid93}
\begin{equation}
P(\mathbf{X} \mid G)=\frac{\prod_{C\in\mathcal{C}(G)}P_{C}(\mathbf{X}_{C})}
{\prod_{S\in\mathcal{S}(G)}P_{S}(\mathbf{X}_{S})},
\label{clisep}
\end{equation}
where for any subset $A \subseteq \Delta$ of nodes $P_{A}(\mathbf{X}_{A})$ denotes the corresponding marginal likelihood of the
subset $\mathbf{X}_{A}$ of data. By a suitable choice of prior distribution,
these marginal likelihoods can be calculated analytically as follows. Let
$n_{A}^{(i)}$ be the number of occurrences of the outcome $x_{A}^{(i)}$ in the
dataset $\mathbf{X}_{A}$ and let the probabilities determining the
corresponding distribution have the Dirichlet $(\alpha_{A_{1}}
,\ldots,\alpha_{A_{k}})$ distribution as the prior. Then, the
marginal likelihood of $\mathbf{X}_{A}$ equals
\[
P_{A}(\mathbf{X}_{A})=\int_{\Theta}\prod_{i=1}^{k}(\theta_{i})^{n_{A}^{(i)}
}\cdot\pi_{A}(\theta)d\theta,
\]
where $\pi_{A}(\theta)$ is the density function of the Dirichlet prior
distribution. By the standard properties of the Dirichlet integral, the marginal likelihood can be further written as
\begin{equation}
P_{A}(\mathbf{X}_{A})=\frac{\Gamma(\alpha)}{\Gamma(n+\alpha)}\prod_{i=1}
^{k}\frac{\Gamma(n_{A}^{(i)}+\alpha_{A_{i}})}{\Gamma(\alpha_{A_{i}}
)},
\label{clique}
\end{equation}
where $\Gamma$ denotes the gamma function and
\[
\alpha=\sum_{i=1}^{k}\alpha_{A_{i}}.
\]
The above result can be utilized as the basis when developing a corresponding expression for decomposable SGs. For these graphs each clique and separator can be considered separately, and the factorization in \eqref{factor} used. This is due to the fact that for a decomposable SG the nodes in a labeled edge $\{\delta ,\gamma\}$ and the nodes in $L_{\{\delta,\gamma\}}$ all belong to the same clique, as labels may not be placed on separators. Hence, a label on an edge in one clique cannot imply changes to the dependence structure between variables associated with any other clique. Given a clique and its associated labels defined in $G_{L}$, it is necessary to define a factorization of the distribution $P_{C}(X_{C})$ using a sequence of conditional distributions. To achieve this we introduce, in accordance with the proof of Theorem \ref{minimal}, a particular ordering of the variables in the clique such that the variable corresponding to the node which all labeled edges have in common is last in the ordering. In the case where all labeled edges have two nodes in common, the last variable in the ordering can be chosen arbitrarily between them, and in the case with no labeled edges the ordering of the variables is arbitrary. When the factorization is based on such an ordering, it becomes clear which dependencies can be excluded from the last conditional distribution and it can be guaranteed that no false dependencies are employed.

An alternative way of formulating the factorization process for a clique 
is to consider the variables that precede the variable $X_\delta$ in
the ordering as parents of $X_{\delta}$, denoted by $X_{\Pi_{\delta}}$.
Hence, except for the last variable in the ordering, all variables depend in
their conditional distribution on each of the values of their parents. For the
last variable some outcomes of its parents will have the same effect and these
values can consequently be grouped together, as is done using default tables or CPT-trees
by \citet{Friedman96} and \citet{Boutilier96}. As an example, consider the SG in Figure \ref{SGMs}a, where the
parent outcomes $(X_1=1, X_2=0)$ and $(X_1=1, X_2=1)$ for $X_3$ are grouped together.
Correspondingly, for the SG in Figure \ref{SGMs}b, the parent outcomes
$(X_1=1, X_2=0)$, $(X_1=1, X_2=1)$, and $(X_1=0, X_2=1)$ are grouped together. This means
that there are effectively only two \textit{distinguishable} parent combinations for
$X_3$, comprising of $\{(X_1=1, X_2=0), (X_1=1, X_2=1), (X_1=0, X_2=1)\}$ and $\{(X_1=0, X_2=0)\}$. 

Using the ordering of variables discussed above the marginal likelihood
$P_{C}(\mathbf{X}_{C})$ for a clique of a decomposable SG can be calculated
using a modified version of the formula introduced by \citet{Cooper92} for the marginal likelihood of a Bayesian network. Our modification is
defined as
\begin{equation}
P_{C}(\mathbf{X}_{C})=\prod_{j=1}^{d}\prod_{l=1}^{q_{j}}\frac{\Gamma
(\sum_{i=1}^{k_{j}}\alpha_{jil})}{\Gamma(n(\pi_{j}^{l})+\sum_{i=1}^{k_{j}
}\alpha_{jil})}\prod_{i=1}^{k_{j}}\frac{\Gamma(n(x_{j}^{i} \mid \pi_{j}^{l}
)+\alpha_{jil})}{\Gamma(\alpha_{jil})},
\label{CHmodi}
\end{equation}
where $d$ equals the number of variables in the clique $C$, $q_{j}$ is the
number of \textit{distinguishable} parent combinations for variable $X_j$
(i.e. there are $q_j$ distinct conditional distributions for variable $X_j$), $k_{j}$ is the
number of possible outcomes for variable $X_j$, $\alpha_{jil}$ is the
hyperparameter corresponding to the outcome $i$ of variable $X_j$ given that the
parental combination of $X_j$ equals $l$, $n(\pi_{j}^{l})$ is the number of
observations of the combination $l$ for the parents of variable $X_j$, and
finally, $n(x_{j}^{i} \mid \pi_{j}^{l})$ is the number of observations where the
outcome of variable $X_j$ is $i$ given that the observed outcome of the parents
of $X_j$ equals $l$. Note that in this context a parent configuration $l$ is not necessarily comprised of a
single outcome of the parents of variable $X_j$, but rather a \textit{group} of outcomes with an
equivalent effect on $X_j$.

The hyperparameters of the Dirichlet distribution are determined by imposing the following two requirements:

\begin{enumerate}
\item The resulting value of $P_{C}(\mathbf{X}_{C})$ is independent of the
initial ordering of the clique variables.

\item In the absence of labels the value of $P_{C}(\mathbf{X}_{C})$ is
identical to that given by \eqref{clique} when the hyperparameters in the
corresponding prior distribution are set equal to 1.
\end{enumerate}
These two requirements can be satisfied by the following choice of hyperparameters,
\[
\alpha_{jil} = \alpha_{jl} = \frac{k \cdot \lambda_{jl}}{\pi_{j} \cdot k_{j}},
\]
where $k$ again equals $|\mathcal{X}_{C}|$, $\pi_{j}$ is the total number of possible outcomes for the parents of variable $X_j$ and $k_{j}$ is the number of possible outcomes for variable $X_j$. Further, $\lambda_{jl}$ equals the number of outcomes for the parents of variable $X_j$ in \textit{group} $l$ with an equivalent effect on $X_j$, if $X_j$ is the last variable in the ordering. Otherwise, $\lambda_{jl}$ equals one. The values $P_{S}(X_{S})$, or even $P_{C}(X_{C})$ if $C$ is a clique containing no labeled edges, can now be calculated using either \eqref{clique} or \eqref{CHmodi} as these will yield the same results.

An essential additional element of learning decomposable SGs given a dataset is to ensure model identifiability. There may exist several different decomposable SGs that all induce the same dependence structure. As an example we can have a look at the SG in Figure \ref{order}a. Here, adding the label $(0, 1)$ to the edge $\{2, 3\}$ would merge the parent outcomes $(X_1=0, X_2=0, X_4=1)$ and $(X_1=0, X_2=1, X_4=1)$ of $X_3$. However, these two outcomes have already indirectly been merged by the existing labels. The label $(0, 0)$ on the edge $\{2, 3\}$ merges parent outcomes $(X_1=0, X_2=0, X_4=0)$ and $(X_1=0, X_2=1, X_4=0)$. The label $(0, 0)$ on the edge $\{3, 4\}$ merges parent outcomes $(X_1=0, X_2=0, X_4=0)$ and $(X_1=0, X_2=0, X_4=1)$. And the label $(0, 1)$ on the edge $\{3, 4\}$ merges parent outcomes $(X_1=0, X_2=1, X_4=0)$ and $(X_1=0, X_2=1, X_4=1)$. Meaning that the outcomes $\{(X_1=0, X_2=0, X_4=0), (X_1=0, X_2=0, X_4=1), (X_1=0, X_2=1, X_4=0), (X_1=0, X_2=1, X_4=1)\}$ already form a group with all outcomes having identical affect on $X_3$, thus the inclusion of the label $(0, 1)$ to the edge $\{2, 3\}$ does not alter the dependence structure. To exclude this possibility, we introduce the concept of \textit{maximal regular} SGMs.
\begin{definition}
Maximal regular SGM. A decomposable SGM is defined as maximal regular if no elements may be added to $L$ without altering the dependence structure. Further, the stratum associated with each edge $\{\delta,\gamma\}$ must be a proper subset of $\mathcal{X}_{L_{\{\delta, \gamma\}}}$.
\end{definition}
The SG of a maximal regular SGM is termed a maximal regular SG. The regularity refers to the condition that an edge cannot be excluded completely from the graph as in an ordinary GM, by setting $\mathcal{L}_{\{\delta,\gamma\}} = \mathcal{X}_{L_{\{\delta,\gamma\}}}$. Maximal regular SGMs constitute a subset of maximal SGMs. In contrast to the class of all SGMs, maximal regular SGMs always entail different dependence structures.
\begin{theorem}
The dependence structure induced by two maximal regular SGMs with stratified graphs $G_L^1$ and $G_L^2$, respectively, are identical if and only if $G_L^1 = G_L^2$.
\end{theorem}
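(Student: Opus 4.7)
The (if) direction is immediate: identical stratified graphs induce identical dependence structures. For (only if), suppose $G_L^1$ and $G_L^2$ induce the same dependence structure. The plan is to first show that the underlying graphs $G^1$ and $G^2$ coincide, and then that the stratum collections $L^1$ and $L^2$ coincide.

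For the equality of the underlying graphs, I would argue by contradiction, assuming without loss of generality that $\{\delta,\gamma\} \in E^1 \setminus E^2$. Because $\{\delta,\gamma\} \notin E^2$, the set $\Delta \setminus \{\delta,\gamma\}$ separates $\delta$ from $\gamma$ in $G^2$, so by the global Markov property the conditional independence $X_\delta \perp X_\gamma \mid X_{\Delta \setminus \{\delta,\gamma\}}$ belongs to the dependence structure induced by $G_L^2$. The key claim is then that this CI cannot be induced by $G_L^1$: the CIs and CSIs induced by $G_L^1$ come from graph separation in $G^1$ and from the pointwise CSIs encoded by $L^1$, and since $\{\delta,\gamma\} \in E^1$ no separator exists for this pair in $G^1$. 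The relevant stratum $\mathcal{L}_{\{\delta,\gamma\}}^1$ provides only CSIs of the form $X_\delta \perp X_\gamma \mid X_{L_{\{\delta,\gamma\}}} = x$ for $x \in \mathcal{L}_{\{\delta,\gamma\}}^1$, and by regularity this is a proper subset of $\mathcal{X}_{L_{\{\delta,\gamma\}}}$. Hence the deducible CSIs cannot cover all outcomes of $X_{L_{\{\delta,\gamma\}}}$, and no combination with the remaining graph separations can manufacture a full CI over $X_{\Delta \setminus \{\delta,\gamma\}}$, contradicting the equality of dependence structures. Thus $E^1 = E^2$ and $G^1 = G^2 =: G$.

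For the equality of strata, with a common graph $G$ the set $L_{\{\delta,\gamma\}}$ of common neighbors of an edge is the same in the two models. By Definition \ref{stratum}, $\mathcal{L}_{\{\delta,\gamma\}}^i$ is precisely the set of outcomes $x$ for which the CSI $X_\delta \perp X_\gamma \mid X_{L_{\{\delta,\gamma\}}} = x$ holds in model $i$, and the maximality built into the definition of maximal regular SGMs guarantees that the stratum in fact lists every such outcome. Because the induced dependence structures coincide, the set of valid CSI outcomes coincides for each edge, yielding $\mathcal{L}_{\{\delta,\gamma\}}^1 = \mathcal{L}_{\{\delta,\gamma\}}^2$ for every edge of $G$ and therefore $L^1 = L^2$, so $G_L^1 = G_L^2$.

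The main obstacle is the nondeducibility claim in the first step: one has to rule out the possibility that combining the pointwise CSIs of $L^1$ (each conditional on a specific outcome of $X_{L_{\{\delta,\gamma\}}}$) with graph-separation CIs from elsewhere in $G^1$ could produce the full CI $X_\delta \perp X_\gamma \mid X_{\Delta \setminus \{\delta,\gamma\}}$. This is precisely where the regularity condition (proper-subset strata) carries the weight, since it guarantees some outcome of $X_{L_{\{\delta,\gamma\}}}$ at which the pair remains dependent in $G_L^1$, blocking any derivation of a global independence between $\delta$ and $\gamma$ in that model.
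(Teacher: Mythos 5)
Your proof is correct and follows essentially the same route as the paper's: first equality of the underlying graphs via the regularity (proper-subset) condition on strata, then equality of the strata via maximality. The only difference is that the paper makes your ``nondeducibility'' step explicit by invoking decomposability of $G^1$ to reduce the full conditional independence $X_\delta \perp X_\gamma \mid X_{\Delta \setminus \{\delta,\gamma\}}$ to independence given each outcome of $X_{L_{\{\delta,\gamma\}}}$, which then directly contradicts $\mathcal{L}_{\{\delta,\gamma\}} \neq \mathcal{X}_{L_{\{\delta,\gamma\}}}$.
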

\textit{Proof}. Let $G_L^1$ and $G_L^2$ denote two maximal regular SGMs with identical dependence structure. Assume first that the two underlying graphs, $G_1$ and $G_2$, differ in that a single edge $\{\delta, \gamma\}$ is present in $G_1$ but absent in $G_2$. Examining $G_2$ it is clear that $X_{\delta} \perp X_{\gamma} \mid X_{\Delta} \backslash (X_{\delta} \cup X_{\gamma})$. From this we can conclude that for $G_L^1$, utilizing the additional information that $G_1$ is a decomposable graph, given any outcome of the variables in $L_{\{\delta, \gamma\}}$ the variables $X_{\delta}$ and $X_{\gamma}$ are independent. This contradicts the assumption that $\mathcal{L}_{\{\delta,\gamma\}} \neq \mathcal{X}_{L_{\{\delta,\gamma\}}}$ in $G_L^1$ and consequently the two graphs $G_1$ and $G_2$ must be identical. Assume now that the set of strata $L_1$ in $G_L^1$ contains an element which is not present in the set of strata $L_2$ in $G_L^2$. This would imply that the element could be added to $L_2$ without altering the dependence structure of $G_L^2$, leading to a contradiction as $G_L^2$ cannot be a maximal regular SGM. $\square$

When learning SGs from data, we will restrict the attention to the class of maximal regular SGs. This means that we can avoid confusion over models having different appearances while leading to the same marginal likelihood due to their identical dependence structure.

\section{Algorithms for Bayesian Learning of SGs}
\label{secAlgorithm}
Bayesian learning of graphical models has attained a considerable interest, both in the statistical and computer science literature, see, e.g. \citet{Madigan94}, \citet{Dellaportas99}, \citet{Giudici99}, \citet{Corander03b}, \citet{Giudici03}, \citet{Koivisto04}, and \citet{Corander08}. Our learning algorithms described below belong to the class of non-reversible Metropolis-Hastings algorithms, introduced by \citet{Corander06} and later further generalized and applied to learning of graphical models in \citet{Corander08}. 

Let $\mathcal{M}$ denote the finite space of states over which the aim is to approximate the posterior distribution. In this paper we will run two separate types of searches. In one search the state space $\mathcal{M}$ will consist of all possible sets of labels, satisfying the restrictions of a maximal regular SG, for a given clique. In the second search the state space will be the set of decomposable undirected graphs combined with the optimal set of labels for that graph. For $M\in\mathcal{M}$, let $Q(\cdot \mid M)$ denote the proposal function used to generate a new candidate state given the current state $M$. Under the generic conditions stated in \citet{Corander08}, the probability with which any particular candidate is picked by $Q(\cdot \mid M)$ need not be explicitly calculated or known, as long as it remains unchanged over all the iterations and the resulting chain satisfies the condition that all states can be reached from any other state in a finite number of steps. To begin with a starting state, $M_{0}$, is determined. At iteration $t=1,2,...$ of the non-reversible algorithm, $Q(\cdot \mid M_{t-1})$ is used to generate a candidate state $M^{\ast}$, which is accepted with the probability
\begin{equation}
\min\left(  1,\frac{P(M^{\ast})P(\mathbf{X} \mid M^{\ast})}{P(M_{t-1}
)P(\mathbf{X} \mid M_{t-1})}\right),
\label{accept}
\end{equation}
where $P(M)$ is a prior probability assigned to $M$. The term $P(\mathbf{X} \mid M)$ denotes the marginal likelihood of the dataset $\mathbf{X}$ given $M$. If $M^{\ast}$ is accepted, we set $M_{t}=M^{\ast}$, otherwise we set $M_{t}=M_{t-1}$.

For non-reversible Markov chains the posterior probability $P(M \mid \mathbf{X})$ does not equal the stationary distribution. Instead, a consistent approximation of $P(M \mid \mathbf{X})$ is obtained by considering the space of distinct states $\mathcal{M}_{t}$ visited by time $t$ such that
\[
\hat{P}(M \mid \mathbf{X})=\frac{P(\mathbf{X} \mid M)P(M)}{\sum_{M\in\mathcal{M}_{t}
}P(\mathbf{X} \mid M)P(M)}.
\]
\citet{Corander08} did prove under rather weak conditions that this estimator is consistent, i.e.
\[
\hat{P}(M \mid \mathbf{X})\overset{a.s.}{\rightarrow}P(M \mid \mathbf{X}),
\]
as $t\rightarrow\infty$. As our main interest will lie in finding the posterior optimal state, i.e.
\[
\arg\mathop{\max}_{M\in\mathcal{M}}P(M \mid \mathbf{X}).
\]
it will suffice to identify
\[
\arg\mathop{\max}_{M\in\mathcal{M}}P(\mathbf{X} \mid M)P(M).
\]

The main goal of our search algorithm is to identify the stratified graph $G_L^{\text{opt}}$ optimizing $P(\mathbf{X} \mid G_L) P(G_L)$. The search is broken down into two parts. Under the assumption that the optimal set of labels is known for each underlying graph a Markov chain traversing the set of possible underlying graphs will eventually identify $G_L^{\text{opt}}$. Another search may be used in order to identify the optimal set of labels given the underlying graph. It was earlier concluded that the marginal likelihood for a decomposable SG can be factorized according to \eqref{clisep}. Due to this the search for the optimal set of labels can be conducted clique-wise. 

Given a decomposable underlying graph $G$ with the set of cliques $\mathcal{C}(G)$, a search is conducted to find the optimal set of labels for each clique $C \in \mathcal{C}(G)$. The sets of labels are assigned uniform priors and cancel each other out in the acceptance probability \eqref{accept}. Using the proposal function defined in Algorithm \ref{AlgoClique}, running a sufficient amount of iterations, we can be assured to find the optimal set of labels for each clique. Combining the sets of labels for each clique will result in an optimal labeling of the underlying graph. 
\begin{algorithm}
\label{AlgoClique} Proposal function used to find optimal set of labels for a clique $C \in \mathcal{C}(G)$.
\end{algorithm}
\noindent The starting state is defined as the empty set containing no labels. Let $L$ denote the current set of labels, and $LP$ the set of labels that can be added to $L$ without violating the restrictions of decomposable stratified graphs.
\begin{enumerate}
\item Set the candidate state $L^* = L$. 
\item If $LP$ is empty and $L$ is non-empty, delete a randomly chosen label in $L^*$.
\item If $L$ is empty and $LP$ is non-empty, add a randomly chosen label from $LP$ to $L^*$.
\item If both $L$ and $LP$ are non-empty, with probability $0.5$ delete a randomly chosen label in $L^*$, otherwise add a randomly chosen label from $LP$ to $L^*$.
\item Expand $L^*$ to include all labels that do not alter the dependence structure.
\item If $L^*$ satisfies the maximal regular restrictions set it as the candidate state, otherwise repeat steps 1-6.
\end{enumerate}

Using this procedure we can assume that the optimal labeling can be found for any underlying graph and we can move on to the search for the best underlying graph with optimal labeling. In this search, instead of using a uniform prior, we use a prior that penalizes dense graphs
\[
P(G_{L})\propto2^{d-f},
\]
where $d$ is the number of nodes in the underlying graph $G$ and $f$ is the number of free parameters in a distribution $P_{\Delta}$ faithful to $G$. This choice of prior is motivated by the fact that adding a label to a sparse graph often induces a context-specific independence in a larger stratum than adding a label to a dense graph. The value $2^{f - d}$ is a numerically convenient approximation of the number of unique dependence structures that can be derived by adding labels to an undirected graph. By looking at the conditional distributions for a variable $X_{\delta}$ with parents $X_{\Pi_{\delta}}$ in $G$, one can see that each parent outcome can be merged with a set of other outcomes by adding a label, removing a free parameter from $P_{\Delta}$ in the process. By adding different labels all but $d$ of the original $f$ free parameters in $P_{\Delta}$ can be removed, resulting in $2^{f - d}$ different dependence structures. This is, however, just an approximation as it is not possible to simultaneously remove any subset of the $f - d$ parameters by including labels. Using the proposal function in Algorithm \ref{AlgoSGM} we conduct the search for the best underlying graph with optimal labeling.
\begin{algorithm}
\label{AlgoSGM} Proposal function used to find the best underlying graph with optimal labeling.
\end{algorithm}
\noindent The starting state is set to be the graph containing no edges. Let $G$ denote the current graph with $G_L = (G, L)$ being the stratified graph with underlying graph $G$ and optimal labeling $L$.
\begin{enumerate}
\item Set the candidate state $G^* = G$. 
\item Randomly choose a pair of nodes $\delta$ and $\gamma$. If the edge $\{\delta, \gamma\}$ is present in $G^*$ remove it, otherwise add the edge $\{\delta, \gamma\}$ to $G^*$.
\item While $G^*$ is not decomposable repeat step 2.
\end{enumerate}
\noindent The resulting candidate state $G^*$ is used along with the corresponding optimal set of labels $L^*$ to form the stratified graph $G^*_L = (G^*, L^*)$ which is used when calculating the acceptance probability according to \eqref{accept}.

In the next section we will use the search operator defined here on a set of synthetic datasets in order to illustrate their efficiency. We will also apply the search operator to a couple of real datasets that have been subject of extensive study in the past. As we will see our results strongly support the use of models that enable the inclusion of context-specific independencies.

\section{Illustration of SG Learning from Data}
\label{secRes}
An SG including seven nodes is shown in Figure \ref{synthetic}a. A probability distribution following the dependence structure defined by this SG is used to generate several sets of data of varying size, this distribution is available in the form of Matlab code in Appendix B.
\begin{figure}[h]
\begin{center}
\includegraphics{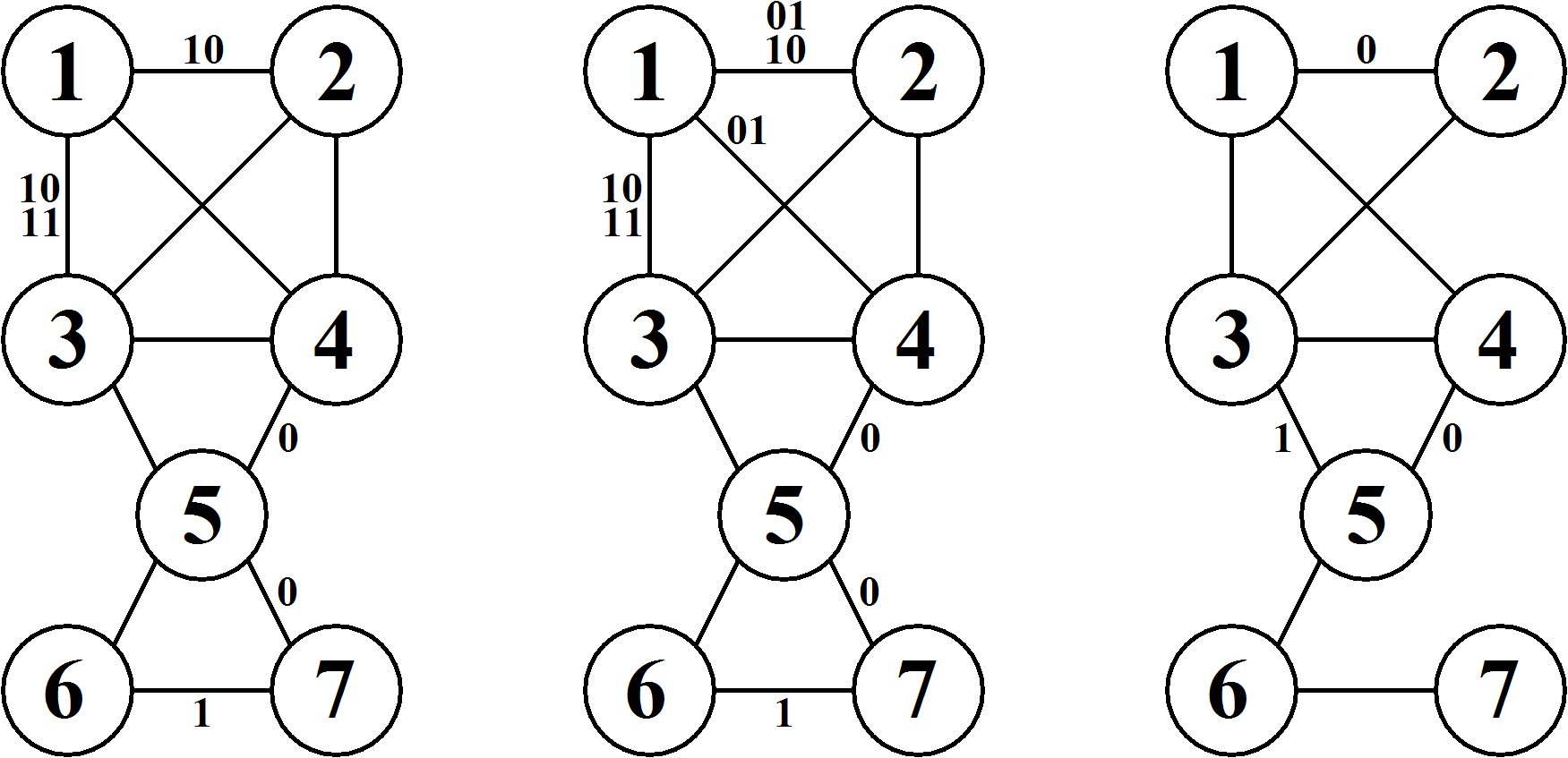}
\end{center}
\caption{Dependence structures of distributions used to determine required size of datasets.}
\label{synthetic}
\end{figure}
When the size of the dataset exceeds 1,000 observations the graphs with the highest posterior probability found by the search method defined in the previous section usually coincide with the generating model. However, as the number of observations drops the optimal graphs start to deviate from the generating graph. The SG in Figure \ref{synthetic}b is representative of the optimal graphs found for datasets including 500 observations. We can see that the underlying graph is still the correct one, but a number of extra labels have been added to the clique $\{1, 2, 3, 4\}$. The SG in Figure \ref{synthetic}c is representative of the optimal graphs for datasets containing only 100 observations. Here we can see that not only do the labels differ strongly from those of the generating graph but also that the underlying graph is missing a couple of edges.

The experiments based on synthetic data confirm that the search algorithms are performing as expected when the data generating structure is known. However, for datasets consisting of less than 1,000 observations the observed posterior modes usually differ from the generating model. This gives rise to an important question, namely, when trying to learn the model structure what is the required size of the dataset? For decomposable SGs, as we try to determine which labels to include in each clique, the size of the cliques will be of relevance. The larger the clique, the larger the set of possible labels, implying that more data is needed to have high probability of discovering the underlying dependence structure in a faithful manner. In an effort to determine the required number of observations in the dataset for cliques with three, four, and five variables we generate multiple datasets of varying size following the dependence structure induced by the graphs in Figure \ref{dataSizeSGM}.
\begin{figure}[h]
\begin{center}
\includegraphics{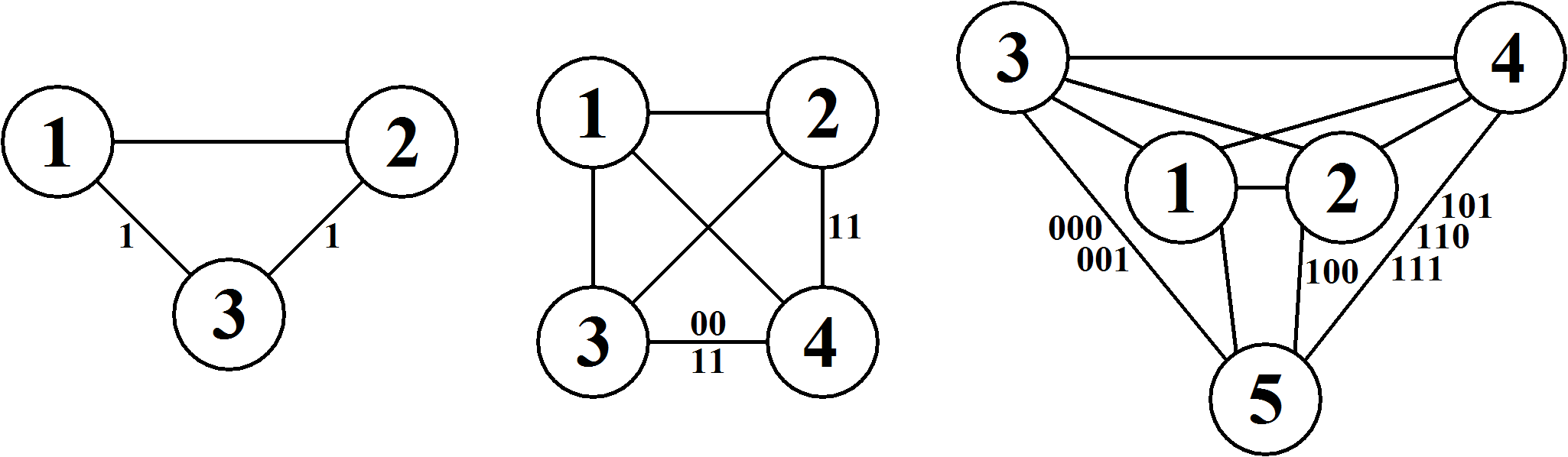}
\end{center}
\caption{Dependence structures of distributions used to determine required size of datasets.}
\label{dataSizeSGM}
\end{figure}
For each dataset size 1,000 datasets are generated, the method defined in Section \ref{secAlgorithm} is then used on each dataset to determine the optimal SG. The log marginal likelihoods for the datasets $\textbf{X}_i$ are calculated given the generating SG and the optimal SG, and denoted by $LML_i^{\text{gen}}$ and $LML_i^{\text{opt}}$, respectively. After which we calculate
\[
y = \exp(\frac{\frac{1}{1000} \sum_{i=1}^{1000} LML_i^{\text{gen}}}{n}) - \exp(\frac{\frac{1}{1000} \sum_{i=1}^{1000} LML_i^{\text{opt}}}{n}),
\]
where $n$ is the data size. The value $y$, which has the intuitive interpretation of being the average difference between the probabilities of observing a single outcome in the dataset given the two models, is used as the measure when determining if the size of the dataset is large enough. Obviously, $y$ will only take negative values, but as the data size increases the optimal model will tend towards the generating model resulting in $y$ tending to zero. For any other model than the generating model $y$ would not tend towards zero.

Figure \ref{dataSize} contains values of $y$ plotted against different data sizes for clique sizes three, four, and five. The three curves all converge to zero but the size of data required for convergence does vary. When the clique consists of only three variables it, in this case, appears that 150 observations are enough for the curve to level off. For four variables we need about 500 observations. The curve corresponding to the case with five variables is more difficult to interpret, it continues to rise at almost the same angle after about 700 observations. In this case it is likely that more than 1,500 observations will be necessary in order to ascertain the generating model. It is very difficult if not impossible, just as in the case for traditional graphical models, to give a general rule of how large the datasets need to be. Not only does the dependence structure, but also the specific probability distribution, affect the complexity of the problem. Some dependencies will be weaker and will require more data in order to be identified. However, the results above give an indication of the required data size and how rapidly this size increases with the size of the cliques.
\begin{figure}[h]
\begin{center}
\includegraphics{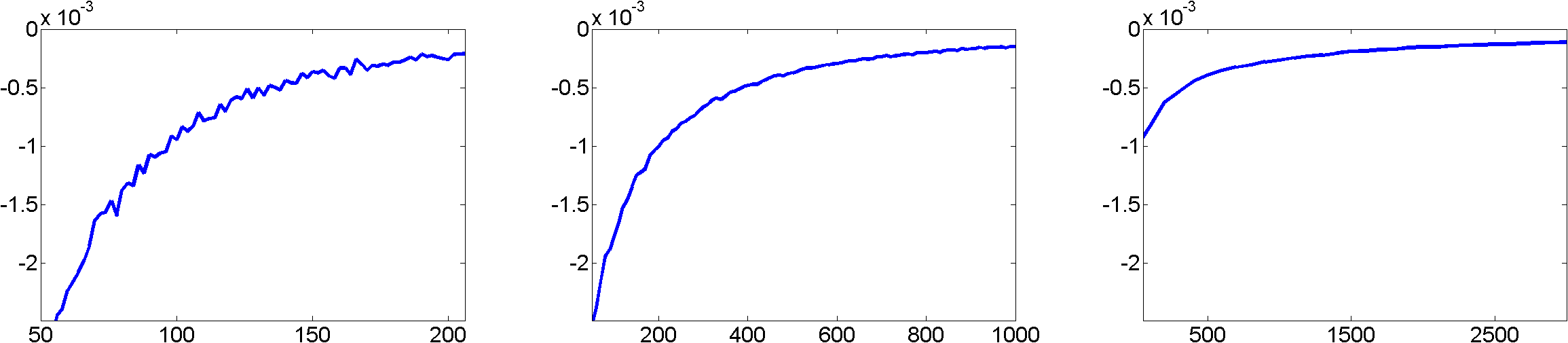}
\end{center}
\caption{The value of $y$ plotted against size of dataset for clique containing a) three variables b) four variables c) five variables.}
\label{dataSize}
\end{figure}

Next we will conduct searches for context-specific independencies in real data, we investigate two particular datasets that have been considered in multiple works concerning learning of the model structure. The first dataset includes prognostic factors for coronary heart disease and can be found in \citet{Edwards85}. The dataset contains 1841 observations on the six variables described in Table \ref{heart_var}.
\begin{table}[tbh]
\begin{center}
\begin{tabular}
[c]{cll} \hline
Variable & Meaning & Range \\ \hline
$X_1$ & Smoking & No = 0, Yes = 1 \\
$X_2$ & Strenuous mental work & No = 0, Yes = 1 \\
$X_3$ & Strenuous physical work & No = 0, Yes = 1 \\
$X_4$ & Systolic blood pressure $> 140$ & No = 0, Yes = 1 \\
$X_5$ & Ratio of beta and alpha lipoproteins $> 3$ & No = 0, Yes = 1 \\
$X_6$ & Family anamnesis of coronary heart disease & No = 0, Yes = 1 \\ \hline
\end{tabular}
\end{center}
\caption{Variables in coronary heart disease data.}
\label{heart_var}
\end{table}
Using the same setup as for the synthetic data, the two best decomposable SGs are depicted in Figure \ref{heart}. They have the log-unnormalized posterior values of $-6715.90$ and $-6716.66$, respectively. The underlying graph in the optimal decomposable SG coincides with the optimal undirected graph as found by \citet{Corander08} and is one of the two graphs suggested by \citet{Edwards85}. The discussion in \citet{Whittaker90} also suggests the possible inclusion of the edges $\{2, 5\}$ and $\{1, 2\}$. Compared to these sources our models are highly similar. However, in addition to the global independencies, our framework suggests for instance the context specific independencies $X_1 \perp X_4 \mid X_5=1$ and $X_4 \perp X_5 \mid X_1=0$.

The interpretation of this is that the knowledge that a person smokes and has a ratio of beta and alpha lipoproteins less than or equal to $3$ will affect the systolic blood pressure in one way and all the other variations for smoking and ratio of beta and alpha lipoproteins in another way. A simplified version would be to say that given that a person has a ratio of beta and alpha lipoproteins larger than $3$, whether or not he smokes or not is unlikely to affect his systolic blood pressure. Interestingly the labeled edges are those that some sources suggest should be included in the model whilst other sources omit from the model.

\begin{figure}[htb]
\begin{center}
\includegraphics{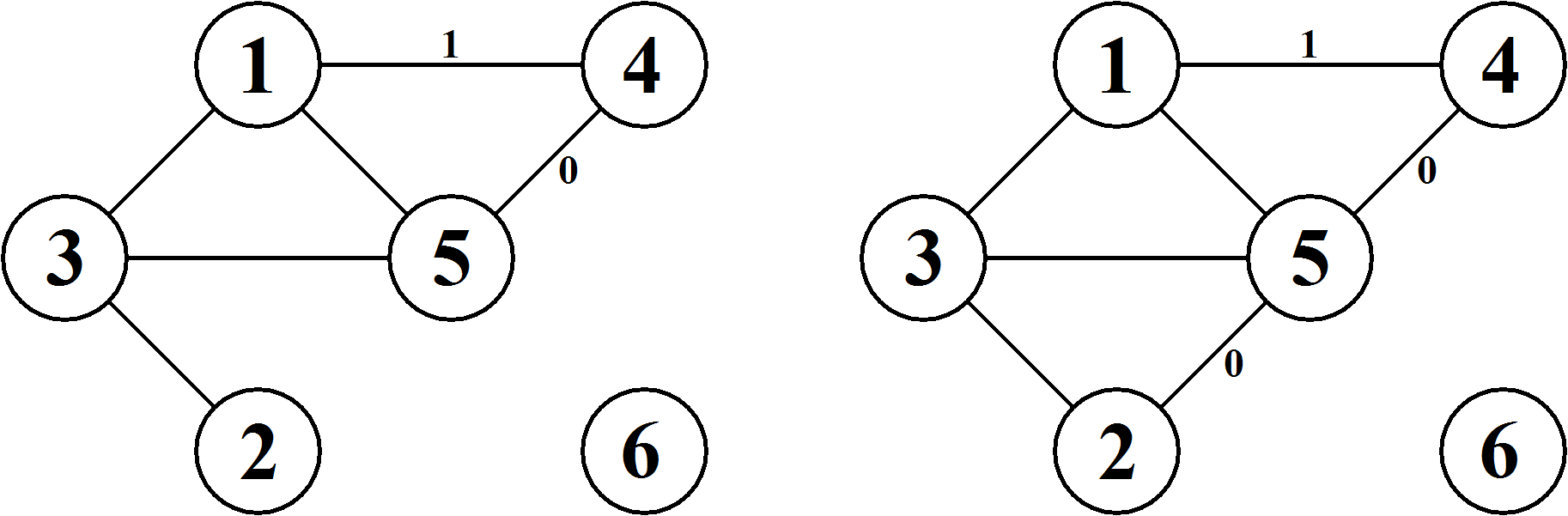}
\end{center}
\caption{SGs with highest posterior probabilities for the heart disease data.}
\label{heart}
\end{figure}

Next we consider a more comprehensive dataset involving 25 variables. This dataset is derived from the answers given by 1806 candidates in the Finnish parliament elections of 2011, in a questionnaire issued by the newspaper Helsingin Sanomat \citep{HelsinginSanomat11}. The questionnaire contains a total of 30 questions, of these 25 are on a ordinal scale. The answers given to these 25 questions by the candidates are transformed to the binary variables listed in Appendix C.

The SG with highest posterior probability is shown in Figure \ref{hs}. The labels are not explicitly given in the graph, due to limited space, instead all the labeled edges are colored red. This maximal regular SG contains 72 edges of which 36 are labeled. The graph contains a total of 87 labels and has a log-unnormalized posterior value of -21949.13. Conducting a search for the best ordinary graphical model results in a graph with 70 edges and a log-unnormalized posterior value of -22043.15. These two graphs share 62 edges, implying that the induced dependence structures resemble each other to a considerable degree.

\begin{figure}[htb]
\begin{center}
\includegraphics{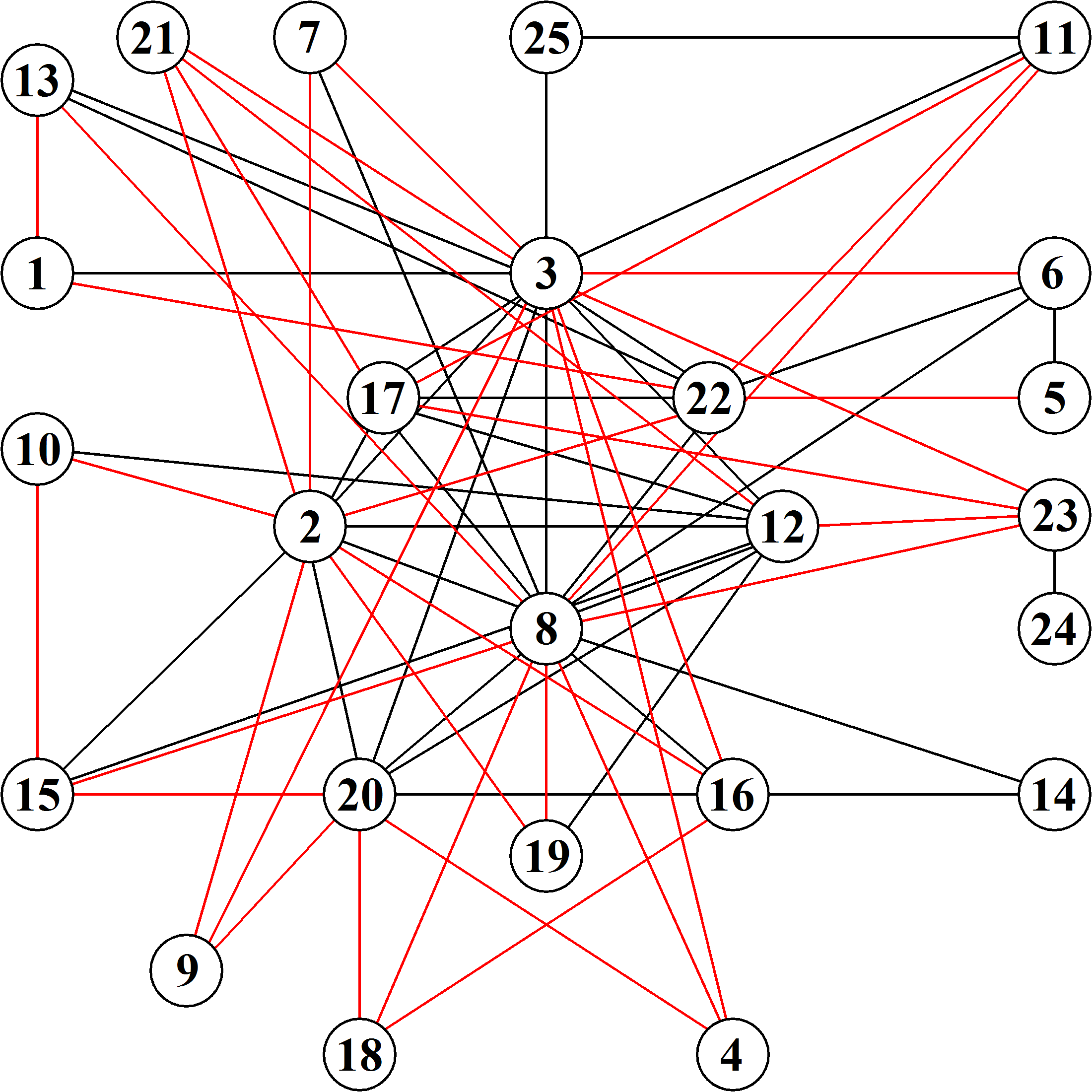}
\end{center}
\caption{Optimal SG for the parliament election data, labeled edges are colored red.}
\label{hs}
\end{figure}

Figure \ref{hsCliques} displays two cliques, found in both the optimal SG and optimal ordinary graph, with the labels associated with the SG. The SG in Figure \ref{hsCliques}a induces a fairly straightforward context-specific dependence structure. Given that we know a candidate's opinion on mandatory military service (variable 12), knowing that the candidate is against equal rights for homosexuals to adopt children (variable 2) is likely to have the same affect on a candidate's view on singing Christian hymns in school (variable 19) as knowing that the candidate is in favor of economic help packages for struggling Euro countries (variable 8). The context-specific dependence structure induced by the SG in Figure \ref{hsCliques}b is much more intricate. However, a simple fact is that a probability distribution faithful to this component of the SG includes 21 free parameters, whereas the corresponding ordinary clique would include 31 free parameters. In total, the optimal SG in Figure \ref{hs} induces a distribution with 324 free parameters while the underlying graph and optimal ordinary graph induce distributions with 407 and 368 free parameters, respectively. This means that the SG induces a more elaborate dependence structure using a substantially smaller number of parameters.

\begin{figure}[htb]
\begin{center}
\includegraphics{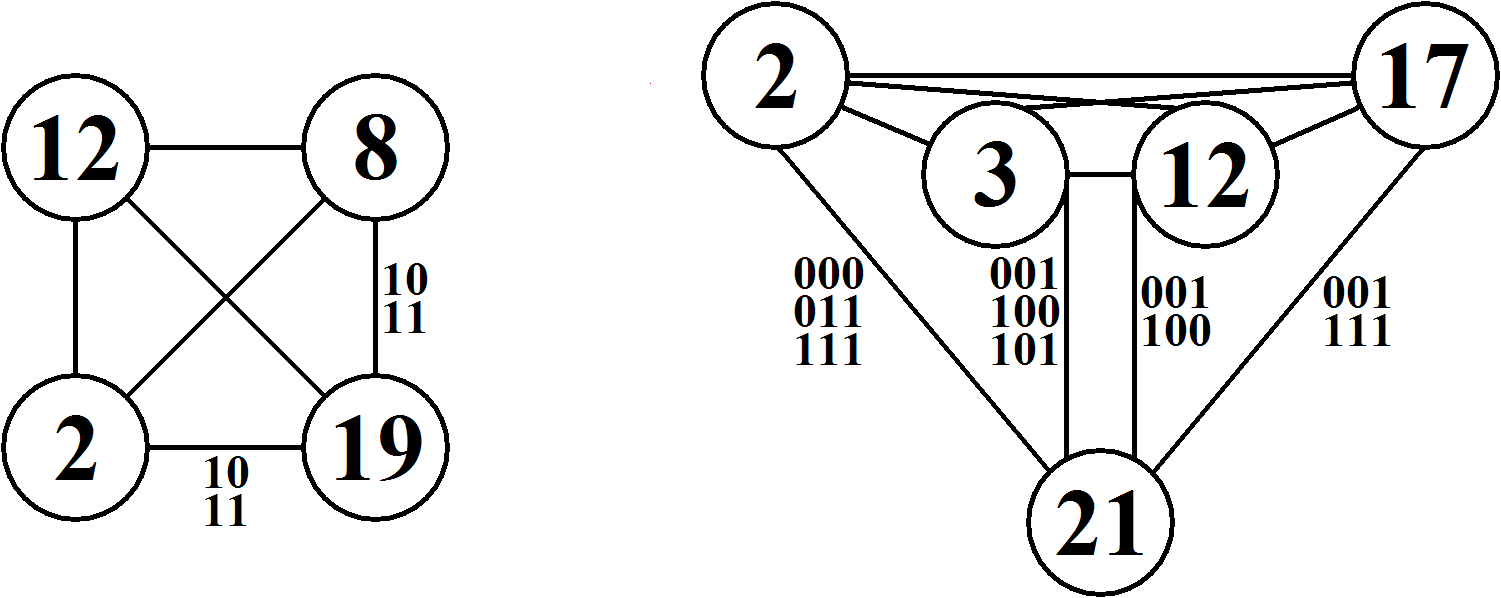}
\end{center}
\caption{Two cliques found both in the optimal SG and optimal ordinary graph.}
\label{hsCliques}
\end{figure}

\section{Discussion}
The versatility of probabilistic graphical models has become clear through their popularity over a wide variety of application areas. On the other hand, their fairly restrictive global form of dependence structures has inspired the development of many generalizations of graphical models where independence can be a function of a context in the outcome space. In fact, similar developments have taken place for the class of Markov chain models, where the variable-order and variable-length Markov chains aim at a generalization of ordinary higher-order Markov chains where the dependence on the history of the process is context-specific \citep{Rissanen83, Weinberger95, Buhlmann99, Bacallado11}. Our formulation of the simultaneous context-specific independence restrictions allows for the derivation of an analytical Bayesian scoring function for decomposable SGs, which is particularly useful for fast learning purposes and leads to a more expressive model class than those considered by \citet{Boutilier96}, \citet{Friedman96}, \citet{Corander03a}, and \citet{Koller09}. In the future it would be fruitful to develop inference methods also for non-decomposable SGs, which do not enjoy an analytically tractable expression for the marginal likelihood. This would further reduce the constraints imposed on the dependence structure and allow for even more expressive range of context-specific independencies to be explored.

\section*{Acknowledgments}
H.N. and J.P. were supported by the Foundation of \AA bo Akademi University, as part of the grant for the Center of Excellence in Optimization and Systems Engineering. J.C. was supported by the ERC grant no. 239784 and academy of Finland grant no. 251170. T.K. was supported by a grant from the Swedish research council VR/NT.

\bibliographystyle{henrik}
\bibliography{biblio}

\begin{thebibliography}{34}
\providecommand{\natexlab}[1]{#1}
\expandafter\ifx\csname urlstyle\endcsname\relax
  \providecommand{\doi}[1]{doi:\discretionary{}{}{}#1}\else
  \providecommand{\doi}{doi:\discretionary{}{}{}\begingroup
  \urlstyle{rm}\Url}\fi

\bibitem[{Bacallado(2011)}]{Bacallado11}
Bacallado, S.
\newblock Bayesian analysis of variable-order, reversible markov chains.
\newblock The Annals of Statistics, 39:838--864 (2011).

\bibitem[{Bishop et~al.(2007)Bishop, Fienberg, and Holland}]{Bishop07}
Bishop, Y.~M., Fienberg, S.~E., and Holland, P.~W.
\newblock Discrete multivariate analysis: theory and practice.
\newblock Springer (2007).

\bibitem[{Boutilier et~al.(1996)Boutilier, Friedman, Goldszmidt, and
  Koller}]{Boutilier96}
Boutilier, C., Friedman, N., Goldszmidt, M., and Koller, D.
\newblock Context-specific independence in bayesian networks.
\newblock In Proceedings of the Twelfth Annual Conference on Uncertainty in
  Artificial Intelligence, pages 115--123 (1996).

\bibitem[{B{\"u}hlmann and Wyner(1999)}]{Buhlmann99}
B{\"u}hlmann, P. and Wyner, A.~J.
\newblock Variable length markov chains.
\newblock The Annals of Statistics, 27:480--513 (1999).

\bibitem[{Cooper and Herskovits(1992)}]{Cooper92}
Cooper, G.~F. and Herskovits, E.
\newblock A bayesian method for the induction of probabilistic networks from
  data.
\newblock Machine learning, 9:309--347 (1992).

\bibitem[{Corander(2003{\natexlab{a}})}]{Corander03b}
Corander, J.
\newblock Bayesian graphical model determination using decision theory.
\newblock Journal of multivariate analysis, 85:253--266 (2003{\natexlab{a}}).

\bibitem[{Corander(2003{\natexlab{b}})}]{Corander03a}
Corander, J.
\newblock Labelled graphical models.
\newblock Scandinavian Journal of Statistics, 30:493--508 (2003{\natexlab{b}}).

\bibitem[{Corander et~al.(2008)Corander, Ekdahl, and Koski}]{Corander08}
Corander, J., Ekdahl, M., and Koski, T.
\newblock Parallell interacting mcmc for learning of topologies of graphical
  models.
\newblock Data Mining and Knowledge Discovery, 17:431--456 (2008).

\bibitem[{Corander et~al.(2006)Corander, Gyllenberg, and Koski}]{Corander06}
Corander, J., Gyllenberg, M., and Koski, T.
\newblock Bayesian model learning based on a parallel mcmc strategy.
\newblock Statistics and Computing, 16:355--362 (2006).

\bibitem[{Darroch et~al.(1980)Darroch, Lauritzen, and Speed}]{Darroch80}
Darroch, J.~N., Lauritzen, S.~L., and Speed, T.
\newblock Markov fields and log-linear interaction models for contingency
  tables.
\newblock The Annals of Statistics, pages 522--539 (1980).

\bibitem[{Dawid and Lauritzen(1993)}]{Dawid93}
Dawid, A. and Lauritzen, S.
\newblock Hyper-markov laws in the statistical analysis of decomposable
  graphical models.
\newblock The Annals of Statistics, 21:1272--1317 (1993).

\bibitem[{Dellaportas and Forster(1999)}]{Dellaportas99}
Dellaportas, P. and Forster, J.~J.
\newblock Markov chain monte carlo model determination for hierarchical and
  graphical log-linear models.
\newblock Biometrika, 86:615--633 (1999).

\bibitem[{Edwards and Toma(1985)}]{Edwards85}
Edwards, D. and Toma, H.
\newblock A fast procedure for model search in multidimensional contingency
  tables.
\newblock Biometrika, 72:339--351 (1985).

\bibitem[{Eriksen(1999)}]{Eriksen99}
Eriksen, P.~S.
\newblock Context specific interaction models.
\newblock Technical report, Department of Mathematical Sciences, Aalborg
  University, Aalborg (1999).

\bibitem[{Eriksen(2005)}]{Eriksen05}
Eriksen, P.~S.
\newblock Decomposable log-linear models.
\newblock Technical report, Department of Mathematical Sciences, Aalborg
  University, Aalborg (2005).

\bibitem[{Friedman and Goldszmidt(1996)}]{Friedman96}
Friedman, N. and Goldszmidt, M.
\newblock Learning bayesian networks with local structure.
\newblock In Proceedings of the Twelfth Annual Conference on Uncertainty in
  Artificial Intelligence, pages 252--262 (1996).

\bibitem[{Geiger et~al.(2001)Geiger, Heckerman, King, and Meek}]{Geiger01}
Geiger, D., Heckerman, D., King, H., and Meek, C.
\newblock Stratified exponential families: graphical models and model
  selection.
\newblock The Annals of Statistics, 29:505--529 (2001).

\bibitem[{Giudici and Castelo(2003)}]{Giudici03}
Giudici, P. and Castelo, R.
\newblock Improving markov chain monte carlo model search for data mining.
\newblock Machine learning, 50:127--158 (2003).

\bibitem[{Giudici and Green(1999)}]{Giudici99}
Giudici, P. and Green, P.
\newblock Decomposable graphical gaussian model determination.
\newblock Biometrika, 86:785--801 (1999).

\bibitem[{Golumbic(2004)}]{Golumbic04}
Golumbic, M.~C.
\newblock Algorithmic graph theory and perfect graphs (2nd ed.).
\newblock Amsterdam: Elsevier (2004).

\bibitem[{Haberman(1974)}]{Haberman74}
Haberman, S.~J.
\newblock The analysis of frequency data, volume 194.
\newblock Chicago, London: University of Chicago Press (1974).

\bibitem[{Hara et~al.(2012)Hara, Sei, and Takemura}]{Hara12}
Hara, H., Sei, T., and Takemura, A.
\newblock Hierarchical subspace models for contingency tables.
\newblock Journal of Multivariate Analysis, 103:19--34 (2012).

\bibitem[{{Helsingin Sanomat}(2011)}]{HelsinginSanomat11}
{Helsingin Sanomat}.
\newblock Hs:n vaalikone 2011 (2011).
\newblock URL http://www2.hs.fi/extrat/hsnext/HS-vaalikone2011.xls, visited
  2013-10-15.

\bibitem[{H{\o}jsgaard(2003)}]{Hojsgaard03}
H{\o}jsgaard, S.
\newblock Split models for contingency tables.
\newblock Computational Statistics \& Data Analysis, 42:621--645 (2003).

\bibitem[{H{\o}jsgaard(2004)}]{Hojsgaard04}
H{\o}jsgaard, S.
\newblock Statistical inference in context specific interaction models for
  contingency tables.
\newblock Scandinavian Journal of Statistics, 31:143--158 (2004).

\bibitem[{Koivisto and Sood(2004)}]{Koivisto04}
Koivisto, M. and Sood, K.
\newblock Exact bayesian structure discovery in bayesian networks.
\newblock The Journal of Machine Learning Research, 5:549--573 (2004).

\bibitem[{Koller and Friedman(2009)}]{Koller09}
Koller, D. and Friedman, N.
\newblock Probabilistic graphical models: principles and techniques.
\newblock London: The MIT Press (2009).

\bibitem[{Koski and Noble(2009)}]{Koski09}
Koski, T. and Noble, J.
\newblock Bayesian networks: an introduction.
\newblock Chippenham: Wiley (2009).

\bibitem[{Lauritzen(1996)}]{Lauritzen96}
Lauritzen, S.~L.
\newblock Graphical models.
\newblock Oxford: Oxford University Press (1996).

\bibitem[{Lauritzen and Wermuth(1989)}]{Lauritzen89}
Lauritzen, S.~L. and Wermuth, N.
\newblock Graphical models for associations between variables, some of which
  are qualitative and some quantitative.
\newblock The Annals of Statistics, pages 31--57 (1989).

\bibitem[{Madigan and Raftery(1994)}]{Madigan94}
Madigan, D. and Raftery, A.~E.
\newblock Model selection and accounting for model uncertainty in graphical
  models using occam's window.
\newblock Journal of the American Statistical Association, 89:1535--1546
  (1994).

\bibitem[{Rissanen(1983)}]{Rissanen83}
Rissanen, J.
\newblock A universal data compression system.
\newblock Information Theory, IEEE Transactions on, 29:656--664 (1983).

\bibitem[{Weinberger et~al.(1995)Weinberger, Rissanen, and
  Feder}]{Weinberger95}
Weinberger, M.~J., Rissanen, J.~J., and Feder, M.
\newblock A universal finite memory source.
\newblock Information Theory, IEEE Transactions on, 41:643--652 (1995).

\bibitem[{Whittaker(1990)}]{Whittaker90}
Whittaker, J.
\newblock Graphical models in applied multivariate statistics.
\newblock Chichester: Wiley (1990).

\end{thebibliography}

\section*{Appendix A}
Proof of Theorem \ref{minimal}. \newline\newline First, we prove that a joint distribution, over the variables $X_{\Delta} = (X_{1}, \ldots, X_{d})$, faithful to a decomposable SG, with $G$ as the underlying graph and $L$ as the set of strata, can always be factorized minimally. Since $G$ is a decomposable undirected graph, $P_{\Delta}$ can be factorized \citep{Lauritzen96} according to
\[
P_{\Delta}(X_{\Delta})=\frac{\prod_{C \in \mathcal{C}(G)}P_{C}(X_{C})}{\prod_{S \in \mathcal{S}(G)}P_{S}(X_{S})}.
\]
For any separator $S \in\mathcal{S}(G)$ it holds that any variable $X_{\delta} \in X_S$ is dependent on each of the other variables in the set $X_S \backslash X_{\delta}$, irrespectively of any context-specific independencies, since decomposable SGs cannot have any labeled edges within separators. This implies that any factorization of the joint distribution $P_{S}(X_{S})$ using a sequence of conditional probabilities  $P(X_{1}) P(X_{2} \mid X_{1}) \ldots P(X_{d_{S}} \mid X_{1},\ldots, X_{d_{S} - 1})$ with $d_S = |S|$ will be void of false dependencies. In a clique without labeled edges the joint distribution can be minimally factorized in the same manner as for a separator.  

Consider the clique distribution $P_{C}(X_{C})$, $C \in\mathcal{C}(G)$, containing the variables $(X_{1}, \ldots, X_{d_{C}})$ with $d_{C} = |C|$. Assume that, in accordance with the definition of decomposable SGs, all the labeled edges have at least one node in common. The corresponding variable is chosen to be the last variable in the ordering, i.e. the variable $X_{d_C}$. In the case where the clique only contains one labeled edge $\{\delta, \gamma\}$ the choice of $X_{d_C}$ is ambiguous, either we choose $X_{d_C} = X_{\delta}$ or $X_{d_C} = X_{\gamma}$. It now follows that the factorization $P(X_{1}, \ldots, X_{d_{C} - 1}) = P(X_{1}) P(X_{2} \mid X_{1}) \ldots P(X_{d_{C} - 1} \mid X_{1},\ldots, X_{d_{C} - 2})$ contains no false dependencies. This can be seen from the stratified graph as all pairs of nodes corresponding to the variables in the set $(X_{1}, \ldots, X_{d_{C}-1})$ will be connected by an unlabeled edge. The final situation to investigate is the conditional probability $P(X_{d_{C}} \mid X_{1}, \ldots, X_{d_{C} - 1})$. This factor could potentially contain false dependencies as the edges leading to the node corresponding to $X_{d_{C}}$ are allowed to be labeled. However, since the values of $(X_{1}, \ldots, X_{d_{C} - 1})$ can be considered known as they appear earlier in the factorization of the joint distribution, and as these are the variables that determine whether or not the conditions of the labels in question are satisfied, it is known which dependencies can be excluded from $P(X_{d_{C}} \mid X_{1}, \ldots, X_{d_{C} - 1})$. Hence, it is always possible to avoid introducing false dependencies for such a clique. This proves that a minimal factorization always exists for a joint distribution faithful to a decomposable SG.

We now prove that a joint distribution, that is faithful to an SG and can be factorized minimally, is necessarily faithful to a decomposable SG. Firstly, for any non-decomposable graph $G$, the associated joint distribution cannot be factorized minimally which is shown as follows. Here a node $\delta$ is denoted $V_{\delta}$. A non-decomposable undirected graph contains a path of nodes $(V_1, \ldots, V_k, V_1)$ with $k \geq 4$, see for instance \citet[p.~127]{Koski09}, such that of all the nodes in the path, node $V_i$ is only adjacent to the nodes $V_{i-1}$ and $V_{i+1}$. To begin the factorization process, one needs to choose an initial variable in the sequence, say $X_1$. When the next factor is chosen, we are constrained to choosing either $P(X_2 \mid X_1)$ or $P(X_k \mid X_1)$ as any other variable $X_j$ is independent of $X_1$ given $X_{\Delta} \backslash (X_1 \cup X_j)$, which implies that such a choice would introduce a false dependency. If $P(X_2 \mid X_1)$ is chosen as the next factor we obtain $P(X_{1},X_{2}) = P(X_{1}) P(X_{2} \mid X_{1})$. After that it must be decided which conditional probability to add next into the factorization. Our options are, following the reasoning above, $P(X_k \mid X_1,X_2)$ and $P(X_3 \mid X_1,X_2)$. However, both of these options introduce a false dependency since for instance, $X_1 \perp X_3  \mid  X_{\Delta} \backslash (X_1 \cup X_3)$. Adding the factor $P(X_3 \mid X_2)$ is not a permissible option either, since $X_1$ and $X_3$ are still dependent at this stage of the factorization process. This ultimately means that no non-decomposable graph can be associated with a joint distribution possessing a minimal factorization, thus any SG possessing such a property must have a decomposable underlying graph.

Assume now that a label would be allowed on an edge in a separator $S$ between two arbitrary cliques $C_1$ and $C_2$. To be able to deduce whether or not the context of the label is satisfied we would need to know the values of $(X_{C_1} \backslash X_S)$ and $(X_{C_2} \backslash X_S)$. But as long as the values of the variables in the separator $X_S$ are unknown, the variables in $(X_{C_1} \backslash X_S)$ and $(X_{C_2} \backslash X_S)$ remain dependent. This in turn renders a minimal factorization impossible, and hence a minimal factorization can only exist if no labels are assigned to edges in separators.

Finally, to prove that a minimal factorization requires that each labeled edge in a clique must have at least one node in common, consider a situation where this assumption does not hold. Let $\{1, 2\}$, and $\{3, 4\}$ denote two labeled edges in a clique and assume that the joint distribution over the variables $(X_1, X_2, X_3, X_4)$ can be minimally factorized. However, to ascertain whether a context-specific independence is present between $X_1$ and $X_2$, it is necessary to know the values of $X_3$ and $X_4$. Consequently, since the context-specific dependence structure between these two variables is unknown until the values of $X_1$ and $X_2$ are fixed, it becomes impossible to guarantee a minimal factorization. The other representative possibility is that we would instead have three labeled edges $\{1, 2\}$, $\{2, 3\}$, and $\{1, 3\}$, where all pairs of edges share a single node but this is not the same node for all of them. Under such circumstances a minimal factorization is again impossible, because when considering the factor $P(X_2 \mid X_1)$, $X_2$ may or may not be dependent on $X_1$ given the value of $X_3$. Thus, for a minimal factorization to be possible all labeled edges in a clique must have at least one node in common.

\section*{Appendix B}
The following Matlab code is used to generate a dataset with a dependence structure following that of the stratified graph in Figure \ref{synthetic}a.

\begin{verbatim}
function data=dataSim(n)
  x2=rand(n,1)<0.5;
  x3=zeros(n,1);
  bol2=x2==1;
  x3(bol2)=rand(sum(bol2),1)<0.8;
  x3(~bol2)=rand(sum(~bol2),1)<0.3;
  bol3=x3==1;
  x4=zeros(n,1);
  x4(bol2 & bol3)=rand(sum(bol2 & bol3),1)<0.6;
  x4(bol2 & ~bol3)=rand(sum(bol2 & ~bol3),1)<0.8;
  x4(~bol2 & bol3)=rand(sum(~bol2 & bol3),1)<0.2;
  x4(~bol2 & ~bol3)=rand(sum(~bol2 & ~bol3),1)<0.4;
  bol4=x4==1;
  x1=zeros(n,1);
  x1(~bol2 & ~bol3 & ~bol4)=rand(sum(~bol2 & ~bol3 & ~bol4),1)<0.5;
  x1(~bol2 & ~bol3 & bol4)=rand(sum(~bol2 & ~bol3 & bol4),1)<0.7;
  x1(~bol2 & bol3 & ~bol4)=rand(sum(~bol2 & bol3 & ~bol4),1)<0.2;
  x1(~bol2 & bol3 & bol4)=rand(sum(~bol2 & bol3 & bol4),1)<0.4;
  x1(bol2 & ~bol3 & ~bol4)=rand(sum(bol2 & ~bol3 & ~bol4),1)<0.2;
  x1(bol2 & ~bol3 & bol4)=rand(sum(bol2 & ~bol3 & bol4),1)<0.8;
  x1(bol2 & bol3 & ~bol4)=rand(sum(bol2 & bol3 & ~bol4),1)<0.2;
  x1(bol2 & bol3 & bol4)=rand(sum(bol2 & bol3 & bol4),1)<0.8;
  x5=zeros(n,1);
  x5(~bol3)=rand(sum(~bol3),1)<0.8;
  x5(bol3 & ~bol4)=rand(sum(bol3 & ~bol4),1)<0.2;
  x5(bol3 & bol4)=rand(sum(bol3 & bol4),1)<0.5;
  bol5=x5==1;
  x6=zeros(n,1);
  x6(~bol5)=rand(sum(~bol5),1)<0.3;
  x6(bol5)=rand(sum(bol5),1)<0.7;
  bol6=x6==1;
  x7=rand(n,1);
  x7(bol5 | ~bol6)=rand(sum(bol5 | ~bol6),1)<0.4;
  x7(~bol5 & bol6)=rand(sum(~bol5 & bol6),1)<0.8;
  data=double([x1 x2 x3 x4 x5 x6 x7]);
end
\end{verbatim}

\section*{Appendix C}
The following is a list of the questions presented to the candidates of the Finnish parliament elections of 2011 by the newspaper Helsingin Sanomat.

\begin{enumerate}
\item Since the mid-1990's the income differences have grown rapidly in Finland. How should we react to this? \\
0 - The income differences do not need to be narrowed. \\
1 - The income differences need to be narrowed.

\item Should homosexual couples have the same rights to adopt children as heterosexual couples? \\
0 - Yes. \\
1 - No.

\item In 2010 the Finnish government issued permits for two new nuclear power plants. The energy company Fortum was not issued a permit. Should Fortum also be issued a permit? \\
0 - Yes. \\
1 - No.

\item Child benefits are paid for each child under the age of 18 living in Finland, independent of the parents' income. What should be done about child benefits? \\
0 - The income of the parents should not affect the child benefits. \\
1 - Child benefits should be dependent on parents' income.

\item Should senior citizens by law be guaranteed a subjective right to medical treatment by the municipalities. \\
0 - Yes. \\
1 - No.

\item The average age when people start to receive old age pension is 63-68 years. In my opinion the lower limit for old age pension should be \\
0 - raised. \\
1 - kept as is or lowered.

\item Since the 1990's a folded index, depending to 80 percent on consumer prices and 20 percent on wage development, has been used to determine the increase in pensions. Because of this the increase in pensions has been less than the general wage increase. What should be done about the folded index? \\
0 - The folded index should to more than 50 percent depend on consumer prices. \\
1 - The folded index should to 50 percent or less depend on consumer prices.

\item Finland, along with other Euro countries, have financed help packages for Euro countries suffering severe economic problems. What is your opinion on the help packages? \\
0 - Paying for the help packages was a mistake. \\
1 - Paying for the help packages benefits Finland in the long run.

\item The European Commission has suggested a world-wide transaction tax, which would introduce a taxation on a number of financial transactions. What is your opinion regarding the transaction tax? \\
0 - A transaction tax should not be introduced. \\
1 - A transaction tax should be introduced.

\item What should be done about the tax deductibility of mortgage interests? \\
0 - The tax deductibility should be expanded or kept unchanged. \\
1 - The tax deductibility should be reduced.

\item A property tax is paid for developed and undeveloped land. Should this tax be expanded to include wood- and farmland? \\
0 - Yes. \\
1 - No.

\item In Finland military service is mandatory for all men. What is your opinion on this? \\
0 - The current practice should be kept or expanded to also include women. \\
1 - The military service should be more selective or abandoned altogether.

\item Should Finland apply for NATO membership? \\
0 - Yes. \\
1 - No.

\item Should Finland in its affairs with China and Russia more actively debate issues regarding human rights and the state of democracy in these countries? \\
0 - Yes. \\
1 - No.

\item Russia has prohibited foreigners from owning land close to the borders. In recent years, Russians have bought thousands of properties in Finland. How should Finland react to this? \\
0 - Finland should not restrict foreigners from buying property in Finland. \\
1 - Finland should restrict foreigners rights to by property and land in Finland.

\item Finland has pledged to follow the UN set target that in 2015, 0.7 percent of the BNP should be used for foreign aid. In 2010 Finland gave 0.55 percent of its BNP in foreign aid. What is your opinion on foreign aid? \\
0 - Foreign aid should not be increased. \\
1 - Foreign aid should be increased.

\item How should handgun restriction laws be amended in Finland? \\
0 - The restriction laws regarding handguns should be expanded. \\
1 - The restriction laws regarding handguns should be kept as is or reduced.

\item Two foreign languages, of which one is the second native language (Finnish or Swedish), are compulsory in the Finnish primary school. Should study of the second native language be made voluntary? \\
0 - Yes. \\
1 - No.

\item Is it appropriate to sing 'Suvivirsi', a hymn traditionally sung in schools before the start of the summer holidays, in public schools? \\
0 - Yes. \\
1 - No.

\item Recently, several laws have been passed making Finnish immigration laws more strict. In your opinion, what is the state of the current immigration laws? \\
0 - To lenient. \\
1 - Appropriate or to strict.

\item The Saimaa ringed seal is classified as a very endangered specie. Are the current protection efforts sufficient? \\
0 - Yes. \\
1 - No.

\item During recent years municipalities have outsourced many services to privately owned companies. What is your opinion on this? \\
0 - Outsourcing should be used to an even higher extent. \\
1 - Outsourcing should be limited to the current extent or decreased.

\item The current number of municipalities in Finland is 336. In your opinion what is the optimal number of municipalities in Finland? \\
0 - Less than 290. \\
1 - More than 290.

\item The creation of a metropolis around Helsinki has been lively debated. What should the government do in this case? \\
0 - The government can at most try to encourage a closer collaboration of the involved municipalities. \\
1 - The government must create a metropolis, using force if necessary.

\item Currently, a system is in place where tax income from more wealthy municipalities is transferred to to less wealthy municipalities. In practice this means that municipalities in the Helsinki region transfers money to the other parts of the country. what is your opinion of this system? \\
0 - The current system is good, or even more money should be transferred. \\
1 - The Helsinki region should be allowed to keep more of its tax income.

\end{enumerate}

\end{document}